\numberwithin{equation}{section}
\newcommand{\mathsout}[1]{\ifmmode\text{\sout{\ensuremath{#1}}}\else\sout{#1}\fi}
\newcommand{\R}{\mathbb{R}}
\newcommand{\bbNabla}{%
  \nabla\mkern-11mu\nabla%
}
\newcommand{\vertiii}[1]{{\left\vert\kern-0.25ex\left\vert\kern-0.25ex\left\vert #1
    \right\vert\kern-0.25ex\right\vert\kern-0.25ex\right\vert}}
\def\RR{\mathbb{R}}\def\R{\mathbb{R}}
\def\rd{\mathrm{d}}  
\def\dh2l{\mathbf{d}_{\mathbb{H}_{2\ell}}}
\def\d2{\mathbf{d}_2}
\newtheorem{theorem}{Theorem}[section]
\theoremstyle{remark}
\theoremstyle{definition}
 \numberwithin{dummy}{section}
\def\1{\mathbbm{1}}
\begin{document}

\title{Branching Stein Variational Gradient Descent for sampling multimodal distributions}

\author[Isa\'ias Ba\~nales]{Isa\'ias Ba\~nales$^\dag$}
\address{$\dag$) Kyoto University, Disaster Prevention Research Institute, Japan}
\email{$\dag$) banales.isaias.7v@kyoto-u.ac.jp}

\author[Arturo Jaramillo]{Arturo Jaramillo$^\ast$}
\address{$\ast$) Centro de Investigaci\'on en Matem\'aticas A.C., Department of Probability and Statistics, M\'exico} 
\email{$\ast$)jagil@cimat.mx}

\author[Joshu\'e Hel\'i Ricalde-Guerrero]{Joshu\'e Hel\'i Ricalde-Guerrero$^\ddag$}
\address{$\ddag$) ETH Zurich, Department of Mathematics, Switzerland}
\email{$\ddag$) joshue.ricalde@math.ethz.ch} 

\thanks{All authors contributed equally to the work, the order of appearance is alphabetical. }

\begin{abstract}
We propose a novel particle-based variational inference method designed to work with multimodal distributions. Our approach, referred to as Branched Stein Variational Gradient Descent (BSVGD), extends the classical Stein Variational Gradient Descent (SVGD) algorithm by incorporating a random branching mechanism that encourages the exploration of the state space. In this work, a theoretical guarantee for the convergence in distribution is presented, as well as numerical experiments to validate the suitability of our algorithm. Performance comparisons between the BSVGD and the SVGD are presented using the Wasserstein distance between samples and the corresponding computational times. \\

\noindent\textsc{Keywords:} Stein Variational Gradient Descent, Variational Inference, Stochastic Simulation.
\end{abstract}

\maketitle




\section{Introduction}

\noindent This paper proposes a variational inference algorithm based on particle methods to sample from multimodal densities. More precisely, we consider the problem of approximating measure of interest \(\pi\), which will be assumed to take the form
\[
\pi(dx) = \frac{1}{Z} \rho(x) dx,
\]
where \(\rho\) is a non-negative, integrable function, and \(Z\) is a normalizing constant. Due to numerical constraints, the normalizing constant Z is often inaccessible, and we work under the standard setting in which only the score function  $\nabla \log \rho$ is available to the user. This will be the standing assumption throughout the paper.\\

\noindent Our starting point is the celebrated Stein Variational Gradient Descent, introduced by Liu and Wang in \cite{liu2016svgd} which is summarized in Algorithm \ref{alg:SVGD}, and further interpreted in the framework of gradient flows in \cite{liu2017svgdflow}, which is the perspective adopted in this paper. Roughly speaking, the method consists of constructing a Wasserstein gradient flow uniquely determined by the score function, and whose asymptotic limit is heuristically close to $\pi$. Convergence guarantees for this method, in the specific setting where the initial condition for the gradient flow is empirical, have been addressed in \cite{shi2024finite}, \cite{liu2024understanding}, and \cite{balasubramanian2024improved}, although the particular frameworks in which these guarantees are phrased is a subtle topic that the reader should take into consideration. The practical implementation of the method has been quite successfull with applications varying between reinforcement learning, amortized inference, discrete latent variable models, graphical models, and Bayesian optimization (see \cite{gangwani2019selfimitate,han2020discrete,liu2017svp,feng2017amortized,liu2018control,wang2018svmp,gong2019quantile})\\

\noindent Despite the method's success and its effectiveness across a wide range of applications, it often struggles when the target distribution exhibits strongly hidden or isolated modes. To address this limitation, we propose a modification that incorporates a random branching mechanism to enhance the algorithm's exploratory capabilities. \\

\noindent One way to describe particle methods in variational inference is to imagine a collection of projectiles moving through space according to an optimization rule, with their collective behavior forming an empirical measure. Building on this metaphor, our approach replaces these "plain projectiles" with "fireworks": particles that still follow an optimization rule in a piecewise deterministic manner, but now randomly generate new descendants at carefully chosen times, scattering their positions around their parent. \\

\noindent This mechanism allows the algorithm to explore the space more effectively and uncover hidden modes. The introduction of this controlled randomness draws inspiration from branching particle systems. As discussed in detail later, the branched SVGD (BSVGD)  method we present exhibits a systematic upgrade in the approximation of the SVGD, while operating within the same computational time.\\

\noindent We test the effectiveness of our approach using  a numerical approximation of the Wasserstein distance. While this paper focuses on practical implementation and numerical results, the analysis of convergence rates remains an important open challenge that we plan to address in future work.\\

\noindent Before delving further into the details of the method, we break down what we conceive as the most fundamental building blocks. The aim is to emphasize the role of each component rather than its definition, which we hope will make it easier to adapt or improve the method in the future to achieve better performance.\\

\noindent In simple terms, the algorithm combines two key mechanisms: (i) a deterministic refinement applied to an initial measure, and (ii) a random perturbation of the refined measure, introduced via a branching particle system. Although these two components might appear conceptually opposed, they are integrated in the algorithm through an inductive alternation of steps, with the deterministic refinement consistently playing the asymptotically dominant role. \\

\noindent The rest of the paper is organized as follows. Section \ref{Sec:Prelimiaries} introduces the notation and fundamental concepts used throughout. Section \ref{sec:branchedSVGD} presents our main contribution, the BSVGD algorithm, and establishes its theoretical guarantees. Lastly, Section \ref{sec:numerical} focuses on the numerical implementation and performance evaluation of BSVGD in two case studies: mixtures of Gaussian distributions and mixtures of banana-shaped distributions.

\section{Preliminaries}\label{Sec:Prelimiaries}

\noindent In this section, we introduce the mathematical concepts that will be used throughout the paper, including the Wasserstein space, Wasserstein gradient flows, and Stein variational gradient descent.

\subsection{Notation}
Throughout the paper, we denote by $\mathcal{P}(X)$ the set of probability measures on a measurable space $(X,\sigma_X)$. In the case where $\mathcal{X}$ is a normed space, we denote by $\mathcal{P}_p(\mathcal{X})$ the set of probability measures for which the mapping $x\mapsto|x|^p$ is integrable. Given another measurable space $( Y, \sigma_Y )$ and a measurable function $T: ( X, \sigma_X ) \to ( Y, \sigma_Y )$, the push-forward of a measure $\mu\in\mathcal{P}(X)$ under $T$ is denoted by $T_{\#} \mu$. This measure is the unique element in $\mathcal{P}(Y)$ satisfying  
\begin{align*}
    \int_{Y} f(y) \, T_{\#} \mu(\mathrm{d} y) = \int_{X} f(T(x)) \, \mu(\mathrm{d}x),
\end{align*}
for any measurable function $f: ( Y, \sigma_Y ) \to (\mathbb{R}, \mathcal{B}(\R))$.   We denote by $\mathcal{P}_{\mathrm{AC}}(\mathbb{R}^{d})$ the set of absolutely continuous probability measures on $\mathbb{R}^{d}$. For  $\mu \in \mathcal{P}_{\mathrm{AC}}(\mathbb{R}^{d})$, its density function is denoted by $f_{\mu}$. 

\subsection{The Wasserstein Space and Gradient Flows}
\label{sec:wasserstein-gradient-flow}
It is well known in optimal transport theory that the space \( \mathcal{P}_2(\mathbb{R}^d) \) is metrizable and carries a differentiable manifold-like structure (see \cite{santambrogio_optimal_2015} and \cite{chewi2025statistical}). SVGD builds upon these properties, putting particular emphasis on the notion of a tangent space to $\mathcal{P}_2(\R^{d})$. 

\noindent We begin with the definition of the Wasserstein distance: for any \( \mu, \nu \in \mathcal{P}_2(\mathbb{R}^d) \),
\begin{align*}
    d_W(\mu,\nu)
   &:= \left( \inf_{\pi \in \Pi(\mu,\nu)} \int_{\mathbb{R}^{2d}} |x - y|^2 \, \pi(\mathrm{d}x, \mathrm{d}y) \right)^{1/2},
\end{align*}
where \( \Pi(\mu,\nu) \) denotes the set of transport plans between \( \mu \) and \( \nu \); namely, the set of probability measures on \( \mathbb{R}^{2d} \) with marginals \( \mu \) and \( \nu \), respectively. The mapping $d_{W}$ is known as the \textit{2-Wasserstein distance}, and it can be shown that $\mathcal{W}_2 := (\mathcal{P}_2(\RR^d),d_W)$ is in fact a complete metric space, see \cite[Proposition 7.1.5]{ambrosioGradientFlowsMetric2008}.\\

\noindent Furthermore, it is known that \( \mathcal{W}_2 \) possesses a differentiable manifold-type structure. One can implement a formal differential calculus on the Wasserstein space, known in the literature as {Otto calculus}, which can be used to generalize the notion of {gradient flows} to \( \mathcal{W}_2 \). The ideas from this theory can be used to deduce, in a reasonably simple manner, most of the results presented in this chapter. However, for the sake of brevity, we omit its presentation and refer the reader to Chapters II.15--II.23 of \cite{villani2009optimal} for a full review of the ma. Given an open interval $J$ containing zero, we define the tangent plane at a given measure $\nu \in \mathcal{W}_2$ as
\begin{align*}
    \mathcal{T}_{\nu}\mathcal{W}
    :=
    \overline{\big\{ v_0 \ ;\ \{(v_t,\mu_t)\}_{t\in J}\text{ satisfy \eqref{Eq:Continuity-Equation} with }\mu_0=\nu \big\}}^{L^{2}(\nu)},
\end{align*}
where \eqref{Eq:Continuity-Equation} corresponds to the \textit{continuity equation} 
\begin{align}
    \label{Eq:Continuity-Equation}
    \partial_t f_{\mu_t} + (-\nabla)^{*}(f_{\mu_t} v_t) = 0,
\end{align}
with $(-\nabla)^{*}$ referring to the divergence operator (that is, the adjoint operator of $-\nabla$ with respect to the inner product associated with the Lebesgue measure), and $\{v_t : \mathbb{R}^d \to \mathbb{R}^d \}_{t\in J}$ is the vector field associated with an absolutely continuous curve $\{\mu_t\}_{t\in J}$ satisfying $\mu_0 = \nu$. \\

\noindent Let $\mathcal{T}\mathcal{W}$ be the disjoint union of $\mathcal{T}_{\nu}\mathcal{W}$ with $\nu$ ranging over $\mathcal{W}_2$. We refer to any mapping $F:\mathcal{T}\mathcal{W}\rightarrow \mathcal{T}\mathcal{W}$, satisfying
\begin{align*}
    &F[\nu]\in\mathcal{T}_{\nu}\mathcal{W},
    &
    &\forall \nu \in \mathcal{W}_2,
\end{align*}
as a \textit{vector field over} $\mathcal{T}\mathcal{W}$.  This then allows us to define a differential calculus on $\mathcal{W}_2$: We say the a curve $\{\mu_t\}_{t\in J} \subset \mathcal{P}_{\mathrm{AC}}(\R^{d})$, embedded in the Wasserstein space with $J$ an interval around zero, solves the \textit{initial value problem}
\begin{align}
    \label{eq:gradientflowabstract}
    &\partial_t\mu_t = F[\mu_t]
    &\mu_0=\nu,
\end{align}
if the following continuity equation holds
\begin{align*}
    &\partial_t f_{\mu_t} + (-\nabla)^{*}(f_{\mu_t} F[\mu_t]) = 0
    &\mu_0=\nu,
\end{align*}
where $F : \mathcal{TW} \to \mathcal{TW}$ vector field and $\nu \in \mathcal{W}_2$ are given. In this case, we shall also refer to the curve $\{\mu_t\}$ as a \textit{flow on $\mathcal{W}_2$}, or simply as a \textit{flow}  (see \cite[Chapters 8 and 11]{ambrosioGradientFlowsMetric2008}).\\

\subsection{Weak Formulation and Kernelization}

An important detail to emphasize is that the formulation we have presented so far applies only to absolutely continuous curves, which is incompatible with the finite particle methods introduced earlier. To address this limitation, we adopt a weaker notion of gradient flow, defined via the \textit{action of the underlying measures on test functions}: We say that the curve \( \{\mu_t\}_{t \in J} \) solves the system \eqref{eq:gradientflowabstract} in the weak sense if, for every smooth and compactly supported function \( \varphi : \mathbb{R}^d \to \mathbb{R} \), the following identity holds:
\begin{align}
    \label{eq:gradientflowweak}
    \partial_t\langle \mu_t, \varphi \rangle - \langle \mu_t, \nabla \varphi \cdot F[\mu_t] \rangle = 0,
\end{align}
where \( \langle \cdot, \cdot \rangle \) denotes the canonical pairing between measures and test functions. For technical details on this formulation, see \cite[Section 8.3]{ambrosioGradientFlowsMetric2008}. This weak form is particularly well suited to the kernelization approach introduced next, which leads to the formulation of the Stein variational gradient flow. \\

\noindent Let \( V : \mathbb{R}^d \to \mathbb{R} \) be a smooth, symmetric function integrating to one, centered around the target distribution \( \pi \). Define the kernel \( K : \mathbb{R}^d \times \mathbb{R}^d \to \mathbb{R} \) by
\begin{align*}
    K(x,y) := V(x - y), \qquad \forall x, y \in \mathbb{R}^d,
\end{align*}
which in turn induces a family of kernel operators \( \{K_\nu\}_{\nu \in \mathcal{W}_2(\mathbb{R}^d)} \) acting on functions via
\begin{align*}
    K_\nu f(x) := \int_{\mathbb{R}^d} K(x,y) f(y) \, \nu(\mathrm{d}y).
\end{align*}

\noindent The kernelized flow \( K_{\mu}F[\mu] \) is the one associated to the equation
\begin{align*}
    \partial_t f_{\mu_t} + (-\nabla)^{*}[f_{\mu_t} \cdot K_{\mu_t}F[\mu_t]] = 0, \qquad \mu_0 = \nu,
\end{align*}
with corresponding weak formulation
\begin{align*}
    \partial_t \langle \mu_t, \varphi \rangle - \langle \mu_t, \nabla \varphi \cdot K_{\mu_t}F[\mu_t] \rangle = 0,
\end{align*}
for every smooth test function \( \varphi \). For background on this kernelization framework and its analytical implications, see \cite[Chapter 5]{chewi2025statistical}.\\

\noindent An advantage of the previous regularization argument can be seen in the case where empirical measures are considered. This perspective will be explored in more detail in the next section, when discussing the case of the flow associated to the Kullback-Liebler divergence minimization.

\subsection{Minimizers, Kullback-Liebler divergence and SVGD}\label{sec:improvement}
As in the classical case, an important class of curves consists of those constructed from an objective function $\mathcal{Y} : \mathcal{P}_2(\R^d) \to \R$ that a given user aims to progressively minimize. Taking inspiration from vector calculus, the natural candidates for this kind of flows would be vector fields acting as some sort of gradient to $\mathcal{Y}$. More precisely, we say a vector field $\bbNabla\mathcal{Y}$ over $\mathcal{W}_2$ is a \textit{Wasserstein gradient} if it satisfies the following ``chain rule inspired'' equation
\begin{align*}
    \frac{ \rd }{\rd t}\mathcal{Y}[\mu_t]\big{|}_{t=0} &=\int_{\R^{d}}\bbNabla\mathcal{Y}[\nu](x)\cdot v_0(x)\vartheta(\rd x),
\end{align*}
for every $\{\mu_t\}_{t\geq 0}$ satisfying the continuity equation \eqref{Eq:Continuity-Equation} with $\mu_0 = \vartheta$
for some vector field $\{ v_t \}_{t\in J}$, see \cite[Section 5]{chewi2025statistical}.\\

\noindent Among the different type of objective functions $\mathcal{Y}$, we are mostly interested in the case where $\mathcal{Y}$ is the \textit{Kullback-Leibler} (KL) divergence:
\begin{align*}
    \mathrm{KL}(\mu\|\nu)
    :=
    \int_{\R^{d}} \log\left( \frac{ f_{\mu}(x) }{ f_{\nu}(x) } \right) \mu(\rd x),
\end{align*}
for $\mu,\nu\in\mathcal{P}_{\mathrm{AC}}(\R^{d})$ (see \cite[Chapter 2]{cover2006elements} for a summary of the properties of the $\mathrm{KL}$ divergence). The groundbreaking work by Jordan,   Kinderlehrer and Otto from \cite{jordan1998variational}, formulated in the notation utilized in this paper, establishes that the flow associated to the Wasserstein gradient of the mapping $\mu\mapsto \mathrm{KL}(\mu \| \nu)$, under suitable assumptions of the initial condition, has a density satisfying the Fokker-Planck equation 
\begin{align}\label{eq:fokkerplanck}
\partial_tf_{\mu_t}
  &=\Delta f_{\mu_t}-(-\nabla)^*(f_{\mu_t}\nabla \log(f_{\nu})),
\end{align}
which in its weak formulation, yields the evolution 
\begin{align}\label{eq:fokkerplanckweak}
\partial_t\langle \mu_t,\varphi\rangle
  &=-\langle  \mu_t,(\nabla \log(f_{\mu_t})-\nabla  \log(f_{\nu}))\cdot \nabla\varphi\rangle .
\end{align}
This gradient flow naturally raises the question of whether it is possible to numerically implement the evolution \( \mu_t \) as an interpolation curve of measures, in a way that provides asymptotic access to the limiting distribution \( \mu_{\infty} = \pi \) through a numerically tractable procedure. Although this is not straightforward within the framework of particle systems, we can make an adjustment that allows us to apply the methodology by means of the previously introduced kernelization procedure. More specifically, for the functional \( \psi[\nu] := \mathrm{KL}(\nu \,\|\, \pi) \), the Wasserstein gradient of \( \psi \) is, as shown in \cite[Examples 5.11 and 5.12]{chewi2025statistical}, given by
\[
\bbNabla \psi[\nu] = \nabla \log(f_\nu) - \nabla \log(f_\pi).
\]
In contrast, the Stein variational gradient flow is defined as the flow associated with the kernelized vector field 
\[
\nu \mapsto K_{\nu}[\nabla \log(f_{\nu})] + K_{\nu}[\nabla V],
\]
which leads to the following gradient flow, already formulated in its weak form:
\begin{align}\label{eq:SVGDflowweak}
\partial_t \langle \mu_t, \varphi \rangle
  &= -\left\langle \mu_t, K_{\mu_t}[\nabla \log(f_{\mu_t}) - \nabla \log(f_{\pi})] \cdot \nabla \varphi \right\rangle.
\end{align}

\noindent This flow admits a version applicable to an initial distribution of the form
\begin{align*}
    \mu_0 &= \frac{1}{\ell} \sum_{j=1}^{\ell} \delta_{x_j(0)},
\end{align*}
for some \( x_1(0), \dots, x_\ell(0) \in \mathbb{R}^d \). This version can be obtained by considering the limit as \( \varepsilon \to 0 \), where the measure \( \mu_t \) is replaced by its mollification \( \mu_t * \gamma_\varepsilon \), with \( \gamma_\varepsilon \) denoting the centered Gaussian kernel with variance \( \varepsilon \).

\noindent Through elementary computations, one can show that the system of differential equations
\begin{align}
    \label{eq:ODeforSVGD}
    \partial_t x_k(t)
    &= 
    K_{\mu_t}[\nabla \log \rho ](x_k) + \frac{1}{\ell} \sum_{j=1}^\ell \nabla V(x_k - x_j),
\end{align}
where $\rho \propto f_\pi$ up to a normalizing constant, is such that the empirical measure
\begin{align*}
    \mu_t &= \frac{1}{\ell} \sum_{j=1}^{\ell} \delta_{x_j(t)},
\end{align*}
solves the system \eqref{eq:SVGDflowweak}. The measure constructed in this way will be referred to, as the Stein variational gradient flow. For a given element $\mathbf{x}\in\R^{d\ell}$, we can take limit as $t$ goes to infinity in the solution to the system \eqref{eq:ODeforSVGD} with the initial condition $(x_1(0), \ldots, x_\ell(0)) = \mathbf{x}$. The value of this vector will be denoted by $\mathcal{S}_\ell(\mathbf{x})$. The SVGD algorithm can be implemented using the pseudocode presented in the Algorithm \ref{alg:SVGD}. \\

\begin{algorithm}
\caption{{SVGD Algorithm \cite{liu2016svgd}}}\label{alg:SVGD}
\begin{algorithmic}[1]
\Require Score function $\nabla\log \rho(x)$ with support in $\mathbb{R}^d$; initial particles $\{x_i^0\}_{i=1}^\ell$; max iterations $M$; step sizes $\epsilon_d$ for $d=1,\dots,M$; differentiable kernel $k$; convergence threshold $\eta$
\Ensure Set of particles $\{x_i\}_{i=1}^\ell$ approximating the target distribution
\State $d \gets 0$
\State $h \gets 2\eta$
\State Initialize $x_i^0 = x_i$ for $i = 1,\dots,\ell$
\While{$d < M$ and $h > \eta$}
    \For{$i = 1$ to $\ell$}
        \State Compute $\hat{\phi}(x_i^d) = \frac{1}{\ell} \sum_{j=1}^\ell \left[ k(x_j^d, x_i^d) \nabla_{x_j^d} \log \rho(x_j^d) + \nabla_{x_j^d} k(x_j^d, x_i^d) \right]$
        \State $x_i^{d+1} \gets x_i^d + \epsilon_d \cdot \hat{\phi}(x_i^d)$
    \EndFor
    \State $h \gets \frac{1}{\ell} \sum_{i=1}^\ell \|x_i^{d+1} - x_i^d\|$
    \State $d \gets d + 1$
\EndWhile
\State \Return $\{x_i^d\}_{i=1}^\ell$
\end{algorithmic}
\end{algorithm}

\noindent As expected, the effectiveness of this procedure depends significantly on the choice of initial condition. This aspect is leveraged in the present manuscript by introducing modifications to the set of particles forming the empirical measure, through a branching mechanism that will be detailed in the next section. The fact that the system of differential equations \eqref{eq:ODeforSVGD} depends solely on the score function $\nabla \log \rho$ suggests a natural procedure for obtaining a proxy for the measure \( \pi \): we can initialize the system \eqref{eq:ODeforSVGD}, use the score function to evolve it toward its asymptotic limit, and then adopt the resulting empirical measure as an approximation of \( \pi \). In this spirit, the empirical distribution associated to a $\mathbf{x}=(x_1,\dots, x_\ell)$ can be regarded to be "improved", if the components are replaced by the vector $\mathcal{S}_\ell(\mathbf{x})$. Owing to this intuition, we will henceforth refer to the mapping 
\begin{align*}
    \mathbf{x} \longmapsto \mathcal{S}_\ell(\mathbf{x})
\end{align*}
as the \textit{improvement operator} associated to the gradient flow \eqref{eq:ODeforSVGD}.

\subsection{Branching Mechanism}\label{sec:branchingmech}
We proceed to introduce a branching mechanism that will interact with the Stein variational gradient flow by modifying the initial conditions. Let \( \mathcal{C} \) denote a set of labels or colors, given by \( \mathcal{C} = \{E, O, S\} \), where \( E \) alludes to the word  explorer, \( O \) to  optimizer, and \( S \) to  spine. The particles of interest will be pairs \( (x, c) \in \mathbb{R}^d \times \mathcal{C} \). This product space will be denoted by \( \mathcal{U} \).  We are interested in a state space consisting of collections of such elements, so we define the state space \( \mathcal{E} \) by those elements in $\bigcup_{\ell \geq 1} \mathcal{U}^\ell$ that have exactly one component colored  ``$S$". The index $\ell$ describing the number of copies of $\mathcal{U}$ to be considered, will be called the level.\\

\noindent 
Consider a triplet of $\mathbb{N}_0$-supported distributions $ q_E, q_O, q_S $ with finite moments of arbitrary large order. Let the initial configuration be a vector $\mathbf{u} = (u_1, \dots, u_\ell) \in \mathcal{U}^\ell $, where each particle is of the form $ u_i = (x_i, c_i) \in \mathbb{R}^d \times \mathcal{C} $, and observe that we can always identify $\mathbf{u}$ with the pair $( \mathbf{x}, \mathbf{c} )$, where $\mathbf{x} = (x_1, \ldots, x_\ell)$ and $\mathbf{c} = ( c_1, \ldots, c_\ell )$. Finally, consider a fixed Markov kernel $\{ P( \mathrm{d} y | x )\ ;\ x\in\mathbb{R}^d \}$ over $\R^{d}$.\\

\noindent In the branching procedure, each particle \( u_i = (x_i, c_i) \) ramifies  independently to the rest of the particles, according to the following rules:
\begin{enumerate}
    
    \item[(i)] 
        Each particle gives birth to a random number of offspring according to their color; i.e., if $c_i = E$ (resp. $c_i = O$ and $c_i = S$), then the number of particles it produces is distributed $q_E$ (resp. $q_O$ and $q_S$).
    
    \item[(ii)] 
        The number of offspring generated by a "spine" is positive. The number of offspring generated an "optimizer" is zero.
        
    \item[(iii)]
        The new particles generated by $x_i$ are colored as "explorer", with their positions determined by $P( \cdot | x_i )$.

    \item[(iv)]
        The old particles remain in their current position and are recolored as "optimizer".

    \item[(v)]
        After all offspring have been generated, one particle is selected uniformly at random from among the "explorers" and "optimizers", and its color is changed to "spine". All other particles retain their color.
\end{enumerate}

\noindent The resulting collection of particles (positions and updated colors) defines the outcome of the Markov transition. This defines a Markov kernel \( Q \) from \( \mathcal{E} \) to itself.

\section{Branched Stein Variational Gradient Descent}\label{sec:branchedSVGD}

\noindent With the preliminaries established, we are now ready to present the main contribution of our paper: the anticipated branched version of the Stein Variational Gradient Descent. 

\subsection{The algorithm}
The construction of the BSVGD is based on defining an appropriate $\mathcal{E}$-valued Markov chain $\{\mathbf{U}_n\}_{n \geq 0}$, whose transitions incorporate both the improvement operator and the branching mechanism described earlier. The operator $\mathcal{S}_\ell$, introduced at the end of Section \ref{sec:improvement}, naturally acts on elements $\mathbf{u} \in \mathcal{U}^\ell$, by taking the pair $(\mathbf{x}, \mathbf{c})$ and producing $(\mathcal{S}_\ell(\mathbf{x}), \mathbf{c})$, which we associate with 
\begin{align*}
    \mathcal{S}_\ell(\mathbf{u}) := \big( ( \mathcal{S}_\ell(\mathbf{x})_1, c_1 ), \ldots, ( \mathcal{S}_\ell(\mathbf{x})_\ell, c_\ell ) \big).
\end{align*}
For notational simplicity, we will omit the dependence on $\ell$ and write this operation as $\mathcal{S}(\mathbf{u})$ throughout the section. We also fix a triple of distributions $q_E, q_O, q_S$ as in Section \ref{sec:branchingmech}, and denote by $Q$ the corresponding Markov kernel on $\mathcal{E}$.\\

\noindent To construct the Markov chain, we begin with an initial element $\mathbf{u}_0 \in \mathcal{E}$ of level $\ell_0$, and set $\mathbf{U}_0 := \mathcal{S}(\mathbf{u}_0)$; then, the transitions of the chain are   governed by the pushforward $\mathcal{S}_{\#} Q$. To be more precise, let $\mathbf{U}_n \in \mathcal{U}^{\ell_n}$ be the Markov chain at the $n$-th step, identified with the pair $(\mathbf{X}_n, \mathbf{c}_n)$, and let $\mu_n$ denote the empirical distribution associated to the vector $\mathbf{X}_n$.  \\

\noindent At each step, given the current state $\mathbf{U}_n$, we draw an independent sample $\mathbf{u}_{n+1}$ from $Q(\mathbf{U}_n, \cdot)$ viewed as a random element in $\mathcal{E}$, and identify it with the pair $( \mathbf{x}_{n+1}, \mathbf{c}_{n+1} )$. We apply the improvement operator and set $\mathbf{U}_{n+1} := \mathcal{S}(\mathbf{u}_{n+1})$; we then compute the new empirical distribution $\mu_{n+1}$ associated to the new vector of positions $\mathbf{X}_{n+1} = \mathcal{S}(\mathbf{x}_{n+1})$. The resulting sequence of measures $\{ \mu_n \}_{ n \geq 1}$ will be referred to as the outcome of the BSVGD.  The BSVGD algorithm can be implemented using the pseudocode presented in the Algorithm \ref{alg:BSVGD}. \\


\begin{algorithm}
\caption{{Branched SVGD}}\label{alg:BSVGD}
\begin{algorithmic}[1]
\Require Score function $\nabla\log \rho(x)$ with support in $\mathbb{R}^d$; initial particles $\{x_i^0\}_{i=1}^{\ell_0}$; initial labels $\{c_{i}^0\}_{i=1}^{\ell_0}$; max iterations $M$; step sizes $\epsilon_d$ for $d=1,\dots,M$; differentiable kernel $k$; convergence function $\eta(\ell)$; maximum number of particles $L$; distributions $q_E$, $q_O$, $q_S$; conditional distribution $P(\cdot | x)$
\Ensure A set of particles $\{x_i\}_{i=1}^\ell$ approximating the target distribution

\State $X \gets \{ x_i^0 \}_{i=1}^{\ell_0}$
\State $C \gets \{ c_i^0 \}_{i=1}^{\ell_0}$
\State $\ell \gets \#X$ \Comment{\# denotes the cardinality}

\While{$\ell \leq L$}
    \State Update $X$ using Algorithm \ref{alg:SVGD} with parameters $\nabla\log \rho(x), X, \epsilon_d, \eta(\ell)$
    
    \For{$i = 1$ to $\ell$}
        \If{$c_{i} = E$}
            \State Sample $\gamma_i \sim q_E$
        \ElsIf{$c_{i} = S$}
            \State Sample $\gamma_i \sim q_S$
        \EndIf
        \State $c_{i} \gets O$

        \If{$\gamma_i > 0$}
            \For{$j = 1$ to $\gamma_i$}
                \State $x_{ j + \ell +\sum_{k=1}^{i-1} \gamma_k} \sim P(\cdot | x_i)$
                \State $c_{ j + \ell + \sum_{k=1}^{i-1} \gamma_k} \gets E$
            \EndFor
        \EndIf
    \EndFor

    \State Sample $k$ uniformly from $\{1, 2, \dots, \ell + \sum_{i=1}^{\ell} \gamma_i\}$
    \State $c_{k} \gets S$
    \State $X \gets \{ x_i \}_{i=1}^{\ell + \sum_{i=1}^{\ell} \gamma_i}$
    \State $C \gets \{ c_i \}_{i=1}^{\ell + \sum_{i=1}^{\ell} \gamma_i}$
    \State $\ell \gets \#X$
\EndWhile

\State \Return $X$

\end{algorithmic}
\end{algorithm}

\noindent Before proceeding, observe that Algorithm \ref{alg:BSVGD} introduces an additional function $\eta$. Heuristically, this function modulates the precision of the SVGD step at line 5, according to the sample size. This will be discussed with further detail in a later section.

\subsection{Convergence results}
This section aims to discuss features of the output of the algorithm that could potentially guarantee convergence of the outcome of the BSVGD $\mu = \{ \mu_n \}_{n \geq 0}$ towards the target distribution $\pi$.\\

\noindent The reader should be warned from the start that the type of theoretical result one might most naturally hope for: a mild and verifiable condition over $V$ and $Q$ that ensures convergence of the algorithm, is far beyond the scope of this work. This is not merely a limitation of our specific framework, but reflects a broader difficulty in the literature: even for the standard, unbranched version of SVGD, establishing convergence under minimal assumptions remains a formidable challenge. Indeed, while there exists a considerable amount of results proving convergence of SVGD (some even offering convergence rates) none of them are available without imposing some form of non-trivial assumption on the initial condition. The reader can easily verify that a condition of this type is really needed, by thinking of a simple degenerate case: if one initializes the plain SVGD algorithm with a large number of particles, but all of them located at the same position, the evolution of the system will emulate that of a single particle, thereby producing a final state that fails to reflect the true diversity of the target distribution.\\

\noindent Several strategies have been proposed to deal with this issue and to still recover meaningful convergence guarantees. All of them, however, rely on some form of structural assumption that is either incompatible with the empirical setting, or difficult to check in practice. Some approaches rely on the assumption that the initial condition is absolutely continuous, others on assuming that the initialization is drawn from a large random sample with adequate convergence distributional features, and others on the assumption that the initial measure belongs to a sequence that converges to an absolutely continuous one.\\

\noindent The first of these options is clearly ruled out in our case, as the entire framework operates with empirical measures. This leaves us with the other two: either consider a random initialization through random sampling, or a sequence of approximating initial conditions. In this work, we adopt the perspective based on the last approach. Once translated and adapted to the BSVGD framework, it leads to the condition for convergence presented in Theorem \ref{thm:main} below. {The second approach seems to be quite attractive as well, and we intend to address a perspective of this nature in future research work.}\\

\noindent In the sequel, if $\chi$ is a locally integrable, non-negative function over $\R^d$, we will denote by $\mathrm{AC}_{\chi}(\R^d)$ the set of elements in $\mathrm{AC}(\R^{d})$ which density bounded by $\chi$. The distance from an element $\nu \in \mathcal{P}_2(\R^{d})$ to the set $\mathrm{AC}_{\chi}(\R^d)$ will be denoted by $d_W( \nu, \mathrm{AC}_{\chi}(\R^d))$. 

\begin{theorem}\label{thm:main}
Let $\mu_{n}$ denote the outcome of the BSVGD described in Section \ref{sec:branchedSVGD}. Suppose that the moments of order two of $\{\mu_n\}_{n\geq 1}$ are uniformly bounded and that there exists $\chi:\R^{d}\rightarrow\R_{+}$ locally integrable, such that   
\begin{align}
    \label{ineq:dcondforthmmainprime}
    d_W(\mu_n, \mathrm{AC}_{\chi}(\R^d) ) \rightarrow 0.
\end{align}
Then the sequence \( \mu_n \) converges weakly to \( \pi \). In particular, we can guarantee convergence of the $\mu_n$ under the condition 
\begin{align}
    \label{ineq:dcondforthmmain}
    d_W(\mu_n, \mathrm{AC}_y(\R^d) ) \rightarrow 0,
\end{align}
where $\mathrm{AC}_y(\R^d)$ denotes the set of elements in $\mathrm{AC}(\R^d)$ with density bounded by some $y\in\R_{+}$.
\end{theorem}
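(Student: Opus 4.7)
The plan is a standard tightness-plus-identification argument, where the identification step uses the fact that $\mu_n$ is a fixed point of the improvement operator $\mathcal{S}$. First, the uniform second-moment bound yields tightness of $\{\mu_n\}$, so any subsequence has a weakly convergent further subsequence $\mu_{n_k} \to \mu^{\ast}$, with $\mu^{\ast} \in \mathcal{P}_2(\R^d)$ by Fatou. Picking approximants $\nu_{n_k} \in \mathrm{AC}_\chi(\R^d)$ with $d_W(\mu_{n_k}, \nu_{n_k}) \to 0$, the triangle inequality gives $\nu_{n_k} \to \mu^{\ast}$ weakly. For every bounded open set $U$ the density bound yields $\nu_{n_k}(U) \leq \int_U \chi\,dx$; Portmanteau together with outer regularity then upgrade this to $\mu^{\ast}(B) \leq \int_B \chi\,dx$ for every Borel $B$, so $\mu^{\ast}$ is absolutely continuous with density bounded by $\chi$ almost everywhere. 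It thus suffices to show that any such AC subsequential limit $\mu^{\ast}$ must equal $\pi$.

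Second, by construction $\mu_n = \mathcal{S}(\mathbf{u}_n)$ is the $t \to \infty$ limit of the ODE \eqref{eq:ODeforSVGD}, so the empirical SVGD velocity vanishes on $\supp(\mu_n)$; equivalently, for every $x$ in the support,
\[
K_{\mu_n}[\nabla \log \rho](x) + \int \nabla V(x-y)\, d\mu_n(y) = 0.
\]
Testing against an arbitrary smooth compactly supported $\varphi : \R^d \to \R^d$ and integrating with respect to $\mu_n$ yields a double-integral identity whose integrand depends continuously on $\mu_n$ in the weak topology. The second-moment bound together with the standing boundedness and growth assumptions on $V$ and $\nabla \log \rho$ provides the uniform integrability needed to pass to the limit along $n_k$, giving the analogous identity for $\mu^{\ast}$. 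Because $\mu^{\ast}$ is AC with density bounded by $\chi$, a mollification followed by integration by parts recasts $\int \nabla V(x-y) f_{\mu^{\ast}}(y)\, dy$ as $K_{\mu^{\ast}}[\nabla \log f_{\mu^{\ast}}](x)$, so the limiting identity becomes the weak stationarity condition for the Stein flow \eqref{eq:SVGDflowweak} at $\mu^{\ast}$:
\[
\int \varphi(x) \cdot K_{\mu^{\ast}}[\nabla \log f_{\mu^{\ast}} - \nabla \log f_\pi](x)\, d\mu^{\ast}(x) = 0 \qquad \text{for all test } \varphi.
\]

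Third, the classical characterization of Stein stationary points concludes the argument. Arbitrariness of $\varphi$ forces $K_{\mu^{\ast}}[\nabla \log f_{\mu^{\ast}} - \nabla \log f_\pi] = 0$ for $\mu^{\ast}$-almost every $x$; pairing this with $(\nabla \log f_{\mu^{\ast}} - \nabla \log f_\pi)$ against $\mu^{\ast}$ produces the squared kernel Stein discrepancy, whose vanishing, under the standing characteristic-kernel hypothesis on $V$, forces $\nabla \log f_{\mu^{\ast}} = \nabla \log f_\pi$ $\mu^{\ast}$-a.e., and hence $\mu^{\ast} = \pi$. Every subsequence of $\{\mu_n\}$ therefore has a further subsequence converging weakly to $\pi$, so $\mu_n \to \pi$ weakly. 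The second statement of the theorem is the specialization $\chi \equiv y$.

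The main obstacle is the passage to the limit in the stationarity identity. The kernelized vector field depends nonlinearly on $\mu_n$ through $K_{\mu_n}$, the particle support is a random finite set, and the integration-by-parts trick that recasts $\int \nabla V(x-y) d\mu(y)$ as $K_\mu[\nabla \log f_\mu](x)$ is available only in the AC regime; hence one must first pass to the limit in the $\nabla V$ form and only then integrate by parts, making essential use of the density bound $\chi$. Controlling $\int \varphi \cdot K_{\mu_n}[\nabla \log \rho]\, d\mu_n$ against a potentially unbounded score also requires growth assumptions on $V$ and $\nabla \log \rho$ that are implicitly part of the standing hypotheses.
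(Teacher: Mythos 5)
Your first half---tightness from the uniform second moments, extraction of approximants $\nu_{n_k}\in\mathrm{AC}_\chi(\R^d)$, and the Portmanteau argument showing that the subsequential limit $\mu^{\ast}$ is absolutely continuous with density dominated by $\chi$---is exactly the paper's argument. Where you genuinely diverge is in the identification of $\mu^{\ast}$ with $\pi$. The paper exploits the fixed-point property $\mathcal{S}(\mathbf{X}_{n_k})=\mathbf{X}_{n_k}$ together with two external results: a continuity statement for the map ``initial datum $\mapsto$ limit of the Stein variational flow'' (Theorem 7 of \cite{gorham2020stochastic}), which transports the convergence $\mu_{n_{k_m}}\to\tau$ to convergence of the improved measures, and \cite[Theorem 3.3]{liu2017svgdflow}, which says the flow started from the absolutely continuous $\tau$ terminates at $\pi$. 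You instead use the same fixed-point property in infinitesimal form---the SVGD velocity field vanishes on $\supp(\mu_n)$---pass to the limit in that first-order condition, and identify $\mu^{\ast}$ as a stationary point of \eqref{eq:SVGDflowweak} with vanishing Stein Fisher information. Your route is more self-contained and makes explicit \emph{why} the limit must be $\pi$ (it is the unique zero of the kernelized dissipation), whereas the paper's route delegates both the stability and the identification to the cited theorems; conversely, the paper's route never needs to differentiate the limiting density or to pass to the limit in a nonlinear double integral.

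That said, your identification step has two concrete soft spots that you should not leave at the level of ``standing hypotheses.'' First, the integration by parts that converts $\int\nabla V(x-y)\,\mu^{\ast}(\mathrm{d}y)$ into $K_{\mu^{\ast}}[\nabla\log f_{\mu^{\ast}}](x)$ requires $f_{\mu^{\ast}}$ to be (at least weakly) differentiable with no singular part; boundedness of the density by $\chi$, which is all the Portmanteau step delivers, does not give this, and a mollification argument reintroduces exactly the limit-interchange you are trying to avoid. Second, the conclusion $\nabla\log f_{\mu^{\ast}}=\nabla\log f_\pi$ $\mu^{\ast}$-a.e.\ only yields $f_{\mu^{\ast}}\propto f_\pi$ on each connected component of $\{f_{\mu^{\ast}}>0\}$; to rule out limits supported on a strict subset you need continuity and strict positivity properties of $f_{\mu^{\ast}}$ (or of $f_\pi$) in addition to the integral strict positive definiteness of the kernel. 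These are precisely the regularity issues that the paper's appeal to \cite{liu2017svgdflow} is meant to absorb, so if you keep your route you must either impose them explicitly or cite a stationary-point characterization that covers merely bounded densities.
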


\noindent The intuition behind condition \eqref{ineq:dcondforthmmain} can be motivated by examining the histogram of \( \mu_n \). If the empirical measure displays no atoms for sufficiently large \( n \), Theorem \ref{thm:main} supports the heuristic that convergence is indeed taking place. This observation, and several other numerical considerations will be presented in Section \ref{sec:numerical}.

\begin{proof}[Proof of Theorem \ref{thm:main}]

\noindent The boundedness of the moments of order two of the $\mu_n$'s imply the sequential compactness property, and hence, to prove the result, it suffices to prove that every convergent subsequence $\{\mu_{n_{k}}\}_{k\geq 1}$ of the $\mu_n$'s has a further subsequence that converges to $\pi$. To this end, we use \eqref{ineq:dcondforthmmain}  a sequence of elements $\{ \nu_{k} \}_{k\geq 1}$ in $\mathrm{AC}(\R^{d})$ with density bounded by $\chi$, satisfying $d_W(\mu_{n_k}, \nu_k )\rightarrow0$. The uniform boundedness of the moments of order two of the $\mu_{n}$'s, together with the boundedness of $d_W(\mu_{n_k},\nu_k)$ implies that $\nu_k$ has moments of order two uniformly bounded, implying the existence of a further subsequence $\nu_{k_m}$ convergent in law. Let $\tau$ denote the weak limit of $\nu_{{k_m}}$. Since the $\nu_{{k_m}}$ have density bounded by $\chi$, by means of Portmanteau's lemma, for every compact $K\subset\R^{d}$,
\begin{align*}
\tau[K]
  &\leq \limsup_{n} \nu_{n_{k_m}}(K) \leq \int_{K}\chi(x)dx.
\end{align*}
By the local integrability of $\chi$, we thus conclude that $\tau$ is absolutely continuous. This observation, combined with the fact that $d_W(\mu_{n_k},\nu_k)\rightarrow0$ implies that $\mu_{n_{k_m}}$ converges in law towards the absolutely continuous measure $\tau$.\\

\noindent We now apply Theorem 7 in \cite{gorham2020stochastic}, to get that the empirical measure associated to $\mathcal{S}(\mathbf{X}_{n_k})$ converges weakly to the limit of the Stein variational gradient flow applied to $\tau$. By \cite[Theorem 3.3]{liu2017svgdflow}, this former probability measure is equal to $\pi$. Since all the $\mathbf{X}_{n}$'s are constructed as the asymptotic limit of the system \eqref{eq:ODeforSVGD}, they are invariant under the action of $\mathcal{S}$, and consequently,  $\mathcal{S}(\mathbf{X}_{n_k})=\mathbf{X}_{n_k}$.  By the previous analysis, the empirical distribution associated to $\mathcal{S}(\mathbf{X}_{n_{k_m}})$ converges to $\pi$, so the identity $\mathcal{S}(\mathbf{X}_{n_k})=\mathbf{X}_{n_k}$ implies that  $\mu_{n_{k_m}}$ converges weakly to $\pi$. We have hence proved that an arbitrary subsequence of $\mu$ has a further subsequence converging to $\pi$, as required.

\end{proof}

\section{Numerical Experiments}
\label{sec:numerical}

\noindent To highlight the suitability of the BSVGD in multimodal cases, as well as its efficiency compared with the classical SVGD, this section focuses on numerical examples. All the codes used to generate the figures presented in this section are public available in the repository isaiasmanuel/SVGD in \href{https://github.com/isaiasmanuel/SVGD}{\textcolor{cyan}{Github}} and were executed in a 24" 2021 iMac with M1 processor.

\subsection{Case studies: Gaussian and Banana-shaped mixtures}
Our first example consists in the mixture of 25 Gaussian densities in $\mathbb{R}^2$, each with a variance of $5\mathbf{I}$, where $\mathbf{I}$ represents the identity matrix. These distributions are arranged in such a way that each of the 25 elements of the Cartesian product $\{0,2,4,6,8 \} \times \{0,2,4,6,8\}$ corresponds to the mean of a different Gaussian, and the weighting parameters of the mixture are given by $\{\frac{1}{325}1,\frac{1}{325}2,...,\frac{1}{325}25\}$, assigned in lexicographical order; e.g. the Gaussian with mean $(0,0)$ has weight $\frac{1}{325}$, the one with mean $(0,2)$ has weight $\frac{2}{325}$, and so on. Visually, the density corresponds to the one shown in Figure \ref{fig:GaussDens}, along with vectorial field of the corresponding score function. This mixture has already been used in literature as a way to test multimodal distribution sampling algorithms, see \cite{wu2023accelerating}. \\

\noindent Our second example follows the idea presented in \cite{benard2023kernel} of using banana-shaped distributions with $t$-tails: initially, the authors discussed that the Stein thining algorithm presented in \cite{liu_kernelized_2016} exhibits spourious minimums for the mixture of banana shaped distributions; to correct this problem, they proposed a variation using a Laplacian correction. 
In the context of multimodal distribution sampling, mixtures of banana-shaped distribution with $t$-tails have been used to exhibit the performance of algorithms, see for example \cite{pompe2020framework}. This is due to the fact that this density is more challenging than the classic Banana shaped Gaussian discussed in \cite{haario1999adaptive}. \\

\noindent Formally, the banana-shaped distribution is defined as follows: let $(x_1,x_2,...,x_d)$ be distributed as a $d$-dimensional $t$-distribution with parameters of location $\mathbf{y}$, scale matrix $\mathbf{\Sigma}=\text{diag}(100,1,...,1)$ and $\mathbf{r}$ degrees of freedom; i.e.
\begin{align*}
    f(\mathbf{x})
    = 
    {\displaystyle { \frac {\Gamma \left[(\mathbf{r} + p)/2\right]}{\Gamma (\mathbf{r}/2) \mathbf{r}^{p/2} \pi^{p/2} \left| \boldsymbol {\Sigma} \right|^{1/2}} }
        \left[ 1+ \frac{1}{\mathbf{r}} ( \mathbf{x} - \mathbf{y} )^{T}{\boldsymbol {\Sigma }}^{-1}( \mathbf {x} - \mathbf{y}) \right]^{ -(\mathbf{r} + p)/2}
    }.
\end{align*}
Then, the \textit{banana-shaped distribution with $t$-tails} is the distribution associated to the vector defined by 
\begin{align*}
    \phi(x_1,x_2, x_3,...,x_d)= (x_1,x_2+bx_1^2-100b,x_3,...,x_d),
\end{align*}
where $b>0$ is a given parameter of nonlinearity. \\

\noindent In our example, we use a mixture of 3 banana-shaped random variables, with locations $(0,0),(0,5),(15,15)$, $b$ parameters $0.03,0.05,0.03$, and weights $0.4,0.4,0.2$, respectively. The density $\rho$ defined by this example and the vectorial field corresponding to the score fucntion $\nabla \log \rho$ are presented in Figure \ref{fig:BananaDens}. \\

\begin{figure}[ht]
    \begin{subfigure}[b]{0.4\textwidth}
    \centering
    \includegraphics[width=1\textwidth]{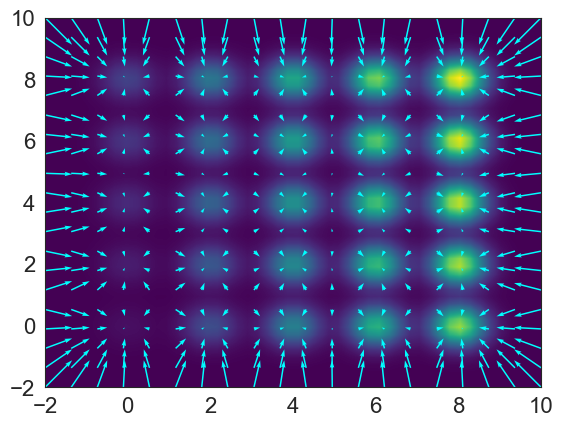}
    \caption{}
    \label{fig:GaussDens}
    \end{subfigure}
    \begin{subfigure}[b]{0.4\textwidth} 
    \centering
    \includegraphics[width=1\textwidth]{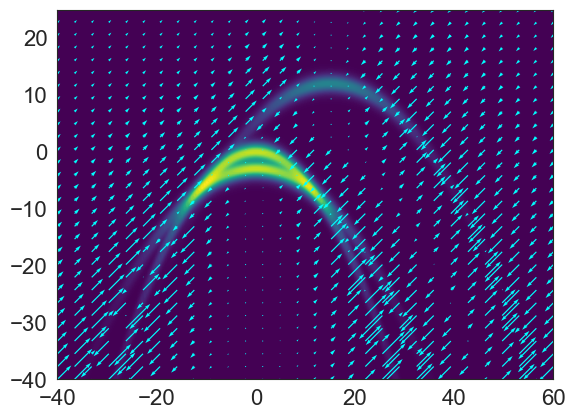}
    \caption{}
    \label{fig:BananaDens}
    \end{subfigure}    
    \caption{$\rho$ densities of interest and the vectorial fields defined by $\nabla \log \rho$. (a) Mixture of Gaussian random variables, (b) Mixture of banana-shaped with $t-$tails random variables.}
\end{figure}

\subsection{Measuring performance}
One of the advantages of working with mixtures of Gaussians and banana-shaped distributions is that we can easily simulate from them. We leverage this property to compare the performance of BSVGD against SVGD. To this end, we use the Wasserstein distance $d_W$ distance, presented in Section \ref{sec:wasserstein-gradient-flow}, to compare two empirical distributions (see also \cite{villani2009optimal}). \\

\noindent Let $\mu$ and $\nu$ be two empirical measures supported on $\{x_1,...,x_\ell\}$ and $\{y_1,...,y_\ell\}$, respectively. The Wasserstein distance between them is given by
\begin{equation}
    \label{eq:Wasserstein}
    {\displaystyle d_W( \mu, \nu )=\inf _{ \sigma }\left( \frac{1}{\ell} \sum _{j=1}^\ell \| x_j - y_{ \sigma(j) } \|^{2}\right)^{\frac {1}{2}},}    
\end{equation}
where the infimum is taken over all the index permutations $\sigma$. In our code this permutation was calculated using the function \textit{linear\_sum\_assignment} of \textit{scipy}.\\ 

\noindent Note that both SVGD and BSVGD generate sequences of empirical measures as the particle positions are updated over time. Our goal is to compare the performance of the two algorithms through these evolving empirical distributions. \\

\noindent We start with the usual SVGD from Algorithm \ref{alg:SVGD}: Let $I_S$ the maximum number of times the vector of positions was updated during the algorithm; then, we define $\{ \mu_{iS} \}_{i = 1}^{ I_S }$ as the sequence such that $\mu_{iS}$ is the empirical measure of the position after the $i$-th update. Now, using the Wasserstein distance as in equation \eqref{eq:Wasserstein}, we can compare each $\mu_{iS}$ with another empirical measure $\pi_{iS}$ of the sample size, defined by sampling independently from the objective $\pi$:
\begin{align}
    \label{eq:pi_is}
    \pi_{iS}:=\frac{1}{\ell} \sum_{j=1}^\ell \delta_{y_{jS}},~y_{jS} \sim \pi,~\forall j = 1, \ldots,~\ell, \forall i = 1, \ldots, I_S.
\end{align}
However, since each $d_W(\mu_{iS}, \pi_{iS})$ is only a point estimator of the real distance, we improve the precision by considering a collection of sequences $\{ \pi_{\cdot S}^a \}_{a = 1}^{ A }$, such that each $\pi_{\cdot S}^a = \{ \pi_{iS}^a \}_{i = 1}^{ I_S }$ is itself an independent copy of $\{ \pi_{jS} \}_{j = 1}^{ I_S }$. Consequently, we define our estimator as the average with respect to this collection of sequences:
\begin{align}
    \label{eq:W_s}
    &W_S(i) := \frac{1}{A} \sum_{a=1}^A d_W(\mu_{iS}, \pi^a_{iS}),
    &
    &\forall i = 1, \ldots, I_S.
\end{align}

\noindent The comparison with the BSVGD follows the same spirit, albeit with a small increase in notational complexity due to the fact that, by construction, the outputs of the BSVGD have an increasing (piece-wise constant) sample size: For each $\ell = 1, \ldots, L$, let $I_B^\ell$ be the maximum number of times the vector of positions was updated during the algorithm at the $\ell$-th level; then, by considering a lexicographic ordering, we define 
\begin{align*}
    \{ \mu_{ j B};~ j = 1, \ldots, J_B \}
    =
    \{ \mu_{i\ell B};~ i = 1, \ldots, I_B^\ell,~\ell = 1, \ldots, L \}
\end{align*}
such that $\mu_{i \ell B}$ is the empirical measure of the position after the $j$-th update at the $\ell$-th level. Similalrly, we have that
\begin{align*}
    &\{ \pi_{ j B};~ j = 1, \ldots, J_B \}
    =
    \{ \pi_{i\ell B};~i = 1, \ldots, I_B^\ell,~\ell = 1, \ldots, L \},
\end{align*}
where each $\pi_{i \ell B}$ is defined as in \eqref{eq:pi_is} with their corresponding level $\ell$, and that $\{ \pi_{\cdot B}^a \}_{a = 1}^{ A }$ is a collection of independent copies of $\pi_{\cdot B} = \{ \pi_{jB} \}_{j = 1}^{ J_B }$. The sequence of estimators $W_B(j)$, $j = 1, \ldots, J_B$ is defined analogusly to \eqref{eq:W_s}.

\subsection{Implementation and results}
For our examples, we run Algorithm \ref{alg:BSVGD} using $\eta(\ell)=\frac{1}{\ell}$ in order to avoid early stops when the sample size increase, i.e. we are being more restrictive in the convergence criterium when the number of particles grow. The flow to the ordinary differential equation \eqref{eq:ODeforSVGD} is approximated by means of an Euler scheme where the step size is set as
\begin{align*}
    \epsilon_d=e_M-\frac{e_M-e_m}{1+e^{-0.01*(d-M*(1/2))}},
\end{align*}
where $e_M$ and $e_m$ are the starting and ending step sizes: 1 and 0.01 in the mixture of Gaussians example, and 10 and 1 in the banana case. By choosing these parameters, we allow big moves at the beginning of the SVGD step, with each successive iteration producing finer movements; this is particularly useful when the offspring is far from the regions with high density, since the point can go fast to the high density region. Other options of $\epsilon_d$ can improve the computational time, e.g. in \cite{liu2016svgd} it is proposed the use of the AdaGrad algorithm introduced in \cite{duchi2011adaptive}. We omit a deeper discussion about this hyperparameter tuning in our comparision due both algorithms using the same function. \\

\noindent To define the position of the offspring in the line 14 of Algorithm \ref{alg:BSVGD}, we use a bivariate Gaussian distribution with mean $x_i$ (their parent), and standard deviation 2 and 5 for the first and second example, respectively. This is the first proposal of how to locate the offspring; nevertheless, adaptative proposal must be explored, as well as the use of mixtures distribution to have local and far descending that allows explore in better ways according to the random variable of interest the space. \\

\noindent The starting points in both examples were taken from a bivariate Gaussian distribution with mean 0 and variance 1. In the SVGD case the sample size is $\ell=500$, and in the BSVGD we take $\ell_0=1$ and $c_0=\{S\}$, that is, we start with only one particle ensured to have offspring. In both algorithms we used a Gaussian kernel defined by 
\begin{align*}
    K_r(x,y):=\pi^{-d/2}e^{- \frac{(x-y)^T(x-y)}{r}},
\end{align*}
with $r=1$, and set the parameter $A = 10$ for the performance estimators $W_S$ and $W_B$.\\

\noindent Regarding the branching mechanism, we set $q_O=0$ with probability one (by definition), $q_S$ a uniform distribution over $\{1,2,3\}$, and 
\begin{align*}
    q_E(x)
    =
    \begin{cases}
        0.5     & \text{if }x=0,\\
        0.2     & \text{if }x=1,\\
        0.3     & \text{if }x=2.\\
    \end{cases}
\end{align*}
The reason behind this configuration is to have a subcritic branching process that allows the sample size to increase slowly. \\

\begin{figure}[!ht]
\centering
    \begin{subfigure}[b]{0.39\textwidth}
    \centering
    \includegraphics[width=1\textwidth]{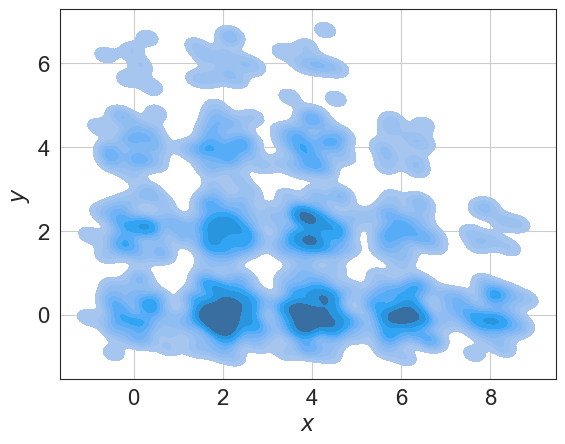}
    \caption{\label{fig:KDEGaussSVGD}}
    \end{subfigure}
    \begin{subfigure}[b]{0.4\textwidth}
    \centering
    \includegraphics[width=1\textwidth]{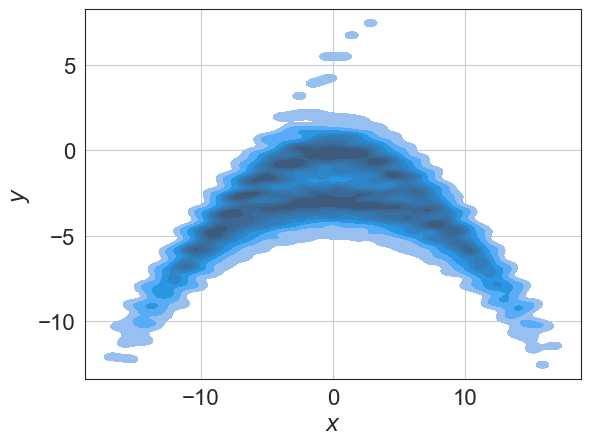}
    \caption{\label{fig:KDEBananaSVGD}}
    \end{subfigure}
    \begin{subfigure}[b]{0.39\textwidth}
    \centering
    \includegraphics[width=1\textwidth]{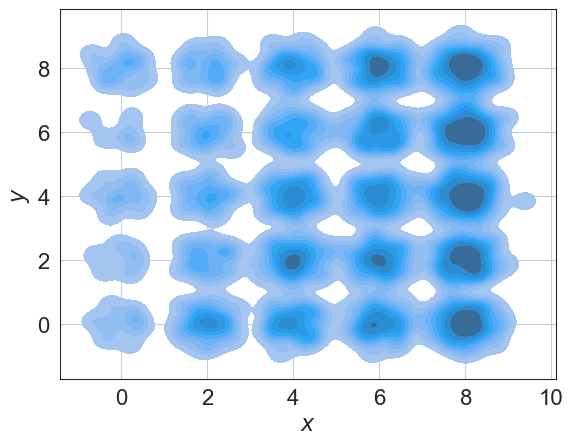}
    \caption{\label{fig:KDEGaussBSVGD}}
    \end{subfigure}
    \begin{subfigure}[b]{0.4\textwidth}
    \centering
    \includegraphics[width=1\textwidth]{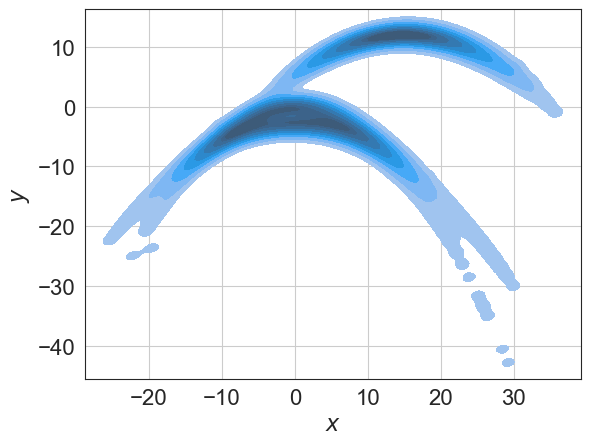}
    \caption{\label{fig:KDEBananaBSVGD}}
    \end{subfigure}

    \caption{Kernel density estimator using the points obtained from SVGD at top and using BSVGD at the bottom, the mixture of Gaussians at left, the mixture of banana shaped distributions at right. }
\label{fig:KDE}

\end{figure}

\noindent In Figure \ref{fig:KDE} we present the kernel density estimators for the points obtained using the SVGD and BSVGD. It is worth noting that even when the sample obtained by BSVGD exhibits an important improvement with respect to the one obtained using SVGD, the SVGD itself has also shown capabilities for detecting multimodality, as it is discussed in \cite{liu2016svgd}. Additionally, the sampling problems that may arise by an early stop on the SVGD could also be solved with more iterations or smallest $\epsilon$ for the stop criterium. Moreover: the BSVGD is computationally more time consuming mainly because the use of SVGD repeatedly; then, in order to compare properly both algorithms it is necessary  not only to see the final samples between both, but to also analyze the performance of each one along the time. \\

\begin{figure}[!ht]
\centering
    \begin{subfigure}[b]{0.39\textwidth}
    \centering
    \includegraphics[width=1\textwidth]{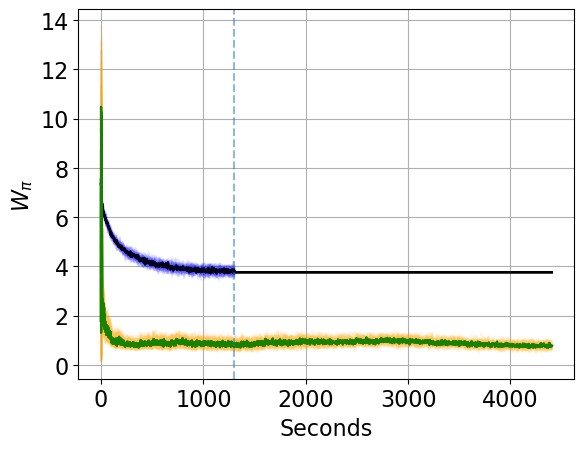}
    \caption{\label{fig:GaussTime}}
    \end{subfigure}
    \begin{subfigure}[b]{0.4\textwidth}
    \centering
    \includegraphics[width=1\textwidth]{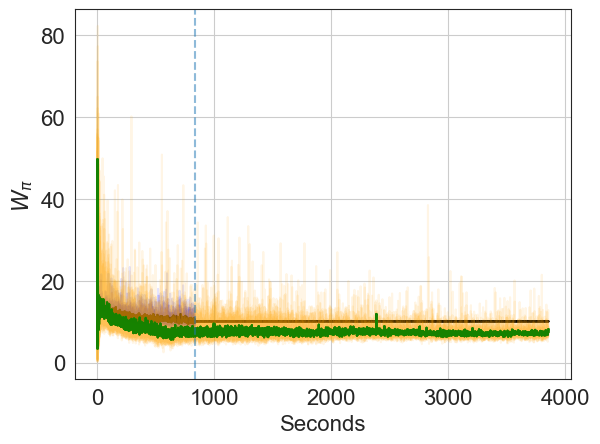}
    \caption{\label{fig:BananaTime}}
    \end{subfigure}

    \begin{subfigure}[b]{0.39\textwidth}
    \centering
    \includegraphics[width=1\textwidth]{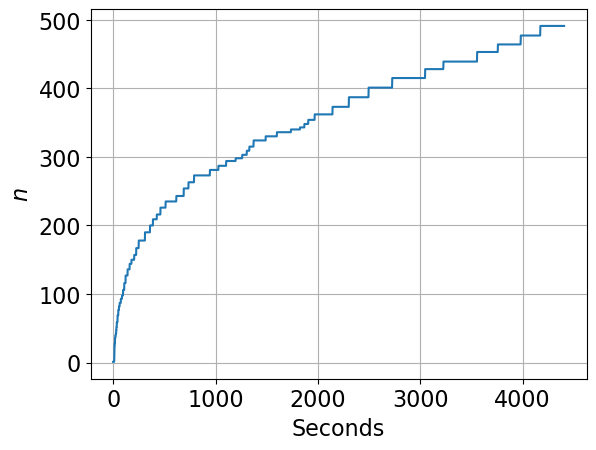}
    \caption{\label{fig:SampleSizeGauss}}
    \end{subfigure}
        \begin{subfigure}[b]{0.4\textwidth}
    \centering
    \includegraphics[width=1\textwidth]{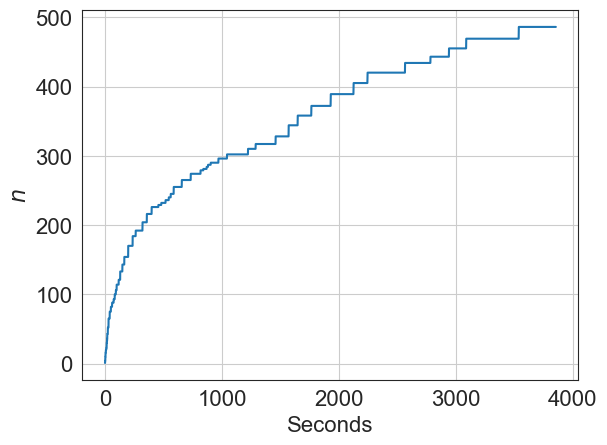}
    \caption{\label{fig:SampleSizeBanana}}
    \end{subfigure}

    \caption{At left the mixture of Gaussian, at right the banana shaped case. At top in black the function $(t_i,W_S(i))$ is presented, where $t_i$ is the time spent to obtain the $i$-th update using the SVGD algorithm and in blue the $A$ functions used in the average to obtain $W_S$ presented until the algorithm convergence. The vertical dashed line is the time when algorithm \ref{alg:SVGD} converges, in green are the $(t_j,W_B(j))$, and in orange are the functions used in its average. At bottom the sample size of the BSVGD at time $t$}
\label{fig:Times}

\end{figure}

\noindent In Figure \ref{fig:Times} we present $(t_i,W_S(i))$ where  $t_i$ is the time required to calculate the $i$-th particles' update, and analogous for $(t_j,W_B(j))$; we also present the sample size of the BSVGD along time. Observe that the BSVGD is computationally more time consuming than the classical SVGD. Nevertheless, we want to remark that if we let the BSVGD run for the same amount of time that takes the SVGD to converge, in our examples the graphs of the function $W_B$ fall under $W_S$. Therefore, an early stopped BSVGD seems to be a good option in contrast to executing the SVGD when the computational time is limited, with the caveat that the sample size will be lower. \\

\noindent Based on the previous examples, we can affirm that the BSVGD is an effective algorithm in multimodal cases with respect to the classical SVGD when $\rho$ presents multimodality. This is due to our two-fold algorithm: the SVGD step accommodates the points, first towards the mode and then towards the tails, while the branching step encourages the exploration of the particles after these have been arranged by the SVGD, preparing them for the next cycle. \\

\subsection{Conclusions and Further Work}

\noindent The BSVGD emerges as a competitive algorithm in different directions. In practical problems, we can obtain a sample that reflects better the multimodality compared with the classical SVGD. \\

\noindent It is important to remark that when the modes of $\rho$ have big valleys between them, the BSVGD struggles to detect the mixture weights properly. Because of this and with the aim to improve the sample between modes, it is necessary to explore new candidates for the branching and exploring distributions. A natural candidate for this include adaptive proposals. \\

\noindent It will be important in future works to also modify the selection of the spine. Instead of taking it uniform between the points, we could take weighting of the points based on $\rho$, if we can evaluate it. This idea is aligned with the work presented in \cite{pompe2020framework}, with the difference that we are searching the modes while also generating an approximated sample.\\

\noindent \textbf{Acknowledgements}\\
Isa\'ias Ba\~nales was supported by JSPS KAKENHI (Grant 21H05200 and 24K17144) and JST SATREPS project (Grant JPMJSA2310).\\
Arturo Jaramillo Gil was supported by
the grant CBF2023-2024-2088.
\\
Joshu\'e Hel\'i Ricalde-Guerrero gratefully acknowledges the support of the SNF project MINT 205121-21981.



\bibliographystyle{plain}
\bibliography{references.bib}

\end{document}